\documentclass[conference]{IEEEtran}
\IEEEoverridecommandlockouts

\usepackage{cite}
\usepackage{amsmath,amssymb,amsfonts}
\usepackage{graphicx}
\usepackage{textcomp}
\usepackage{xcolor}
\usepackage{amsthm}
\usepackage{algorithmic}
\usepackage[ruled]{algorithm2e}
\usepackage{bm}
\usepackage{bbm}
\usepackage{hyperref}   
\usepackage{mathtools}
\usepackage{booktabs}
\usepackage{multirow}
\usepackage{enumitem}
\usepackage{wasysym}    
\usepackage{graphicx}   

\newtheorem{assumption}{Assumption}
\newtheorem{lemma}{Lemma}
\newtheorem{theorem}{Theorem}
\newtheorem{definition}{Definition}
\newtheorem{proposition}{Proposition}

\usepackage{pifont}
\newcommand{\cmark}{\textcolor{green}{\ding{52}}}
\newcommand{\xmark}{\textcolor{red}{\ding{55}}}
\newcommand{\middlehrulefill}{%
  \leavevmode\xleaders\hbox{\rule{0.6em}{0.4pt}}\hfill\kern0pt%
}
\newcommand{\tightemoji}[2][2.7mm]{\smash{\raisebox{#1}{#2}}}

\newcommand{\arctopK}{{\texttt{\textbf{ARC}}\nobreakdash-\texttt{\textbf{Top}}\nobreakdash-{$K$}}}
\newcommand{\RandK}{{\texttt{\textbf{Rand}}\nobreakdash-{$K$}}}
\newcommand{\TopK}{{\texttt{\textbf{Top}}\nobreakdash-{$K$}}}

\usepackage{array}
\newcolumntype{P}[1]{>{\centering\arraybackslash}p{#1}}
\newcolumntype{L}[1]{>{\left\arraybackslash}p{#1}}

\setlength{\textfloatsep}{-1mm plus 2pt minus 2pt}

\def\BibTeX{{\rm B\kern-.05em{\sc i\kern-.025em b}\kern-.08em
    T\kern-.1667em\lower.7ex\hbox{E}\kern-.125emX}}

\allowdisplaybreaks

\begin{document}
\title{An All-Reduce Compatible {\texttt{Top}\nobreakdash-{$K$}} Compressor for Communication-Efficient Distributed Learning\\
}

\author{
\IEEEauthorblockN{Chuyan Chen$^{*}$}
\IEEEauthorblockA{
\textit{Peking University}\\
Beijing, China \\
chuyanchen@stu.pku.edu.cn}
\and
\IEEEauthorblockN{Chenyang Ma$^{*}$}
\IEEEauthorblockA{
\textit{Peking University}\\
Beijing, China \\
2300010754@stu.pku.edu.cn}
\and
\IEEEauthorblockN{Zhangxin Li$^{*}$}
\IEEEauthorblockA{
\textit{Peking University}\\
Beijing, China \\
2200011085@stu.pku.edu.cn}
\and
\IEEEauthorblockN{Yutong He}
\IEEEauthorblockA{
\textit{Peking University}\\
Beijing, China \\
yutonghe@pku.edu.cn}
\and
\hspace{4cm}
\IEEEauthorblockN{Yanjie Dong$^\dagger$}
\hspace{4cm}\IEEEauthorblockA{
\hspace{4cm}\textit{Shenzhen MSU-BIT University}\\
\hspace{4cm} Shenzhen, P. R. China \\
\hspace{4cm} ydong@smbu.edu.cn}
\and
\IEEEauthorblockN{Kun Yuan$^\dagger$}
\IEEEauthorblockA{
\textit{Peking University}\\
Beijing, P. R. China \\
kunyuan@pku.edu.cn}
\thanks{$^{*}$Equal contributions.\quad$^\dagger$Co-Corresponding author}
\thanks{The work is supported by the NSF China under Grants 92370121, the National Key Research and Development Program of China under Grant 2024YFA1012902, and Guangdong Provincial Key Research Project for Regular Universities (2025ZDZX3048)}
}

\maketitle

\begin{abstract}
Communication remains a central bottleneck in large-scale distributed machine learning, and gradient sparsification has emerged as a promising strategy to alleviate this challenge. However, existing gradient compressors face notable limitations: \RandK\ discards structural information and performs poorly in practice, while \TopK\ preserves informative entries but loses the contraction property and requires costly \texttt{All-Gather} operations. In this paper, we propose \arctopK, an \texttt{All-Reduce}-Compatible Top-$K$ compressor that aligns sparsity patterns across nodes using a lightweight sketch of the gradient, enabling index-free \texttt{All-Reduce} while preserving globally significant information. \arctopK\ is provably contractive and, when combined with momentum error feedback (EF21M), achieves linear speedup and sharper convergence rates than the original EF21M under standard assumptions. Empirically, \arctopK\ matches the accuracy of \TopK\ while reducing wall-clock training time by up to 60.7\%, offering an efficient and scalable solution that combines the robustness of \RandK\ with the strong performance of \TopK.



\end{abstract}

\begin{IEEEkeywords}
Distributed Optimization, Communication Compression, Error Feedback, Top-$K$, All-Reduce
\end{IEEEkeywords}

\section{Introduction}
The rise of large-scale machine learning has established distributed training as a fundamental paradigm in modern AI systems. In centralized cloud environments, massive datasets are partitioned and sharded across $N$ nodes (e.g., GPUs) within a data center to enable data-parallel training~\cite{shi2016edge}~\cite{satyanarayanan2017emergence}. In contrast, federated learning keeps data on $N$ client devices (e.g., smartphones) for privacy and regulatory compliance, while a coordinating server aggregates model updates instead of raw data~\cite{konevcny2016federated,konevcny2016federated2,li2020federated}. Both paradigms give rise to the following distributed stochastic optimization problem:
\begin{align}
    \min_{\bm{x}\in\mathbb{R}^{d}}\ f(\bm{x}) \coloneqq \left\{\frac{1}{N}\sum_{i=1}^{N} \left[f_i(\bm{x}) \coloneqq \mathbb{E}_{\bm{\xi}\sim\mathcal{D}_i}F_i(\bm{x},\bm{\xi})\right] \right\}.
    \label{eq:formulation}
\end{align}
Here, $N$ denotes the number of computing nodes, $\bm{\xi}$ represents a random data sample drawn from the local distribution $\mathcal{D}_i$, and $f_i: \mathbb{R}^d \rightarrow \mathbb{R}$ is the local loss function at node $i$, which may be non-convex. While each node possesses its local data and loss function, they collaborate to minimize the global loss function $f(\bm{x})$ defined in problem \eqref{eq:formulation}.

In such distributed settings, each node $i$ can only access its local data and calculate local gradient $\nabla F_i(\bm{x};\bm{\xi})$ of its loss function during optimization. However, communication is required to obtain information from other nodes, which frequently becomes the primary bottleneck and dominates end-to-end training time. The associated cost scales with both the model dimension $d$ and the number of nodes $N$, rendering it prohibitive for large-scale learning tasks. Consequently, minimizing the volume of communicated bits per round is a critical problem for efficient and scalable distributed learning.

\begin{table*}[tb!]  
\renewcommand{\arraystretch}{1.5}
\begin{center}
\caption{\small Comparison between different gradient sparsification compressors. The symbol ``$\downarrow$'' indicates that a smaller value of the quantity is preferable. The symbol {\color{green}{\Large \smiley}} implies superior performance} 
\vspace{-2mm}
{\tabcolsep=2pt
\begin{tabular}{lP{2cm}P{1.8cm}P{2.6cm}P{2.8cm}P{2.2cm}P{2.2cm}}
\toprule
\textbf{\renewcommand{\arraystretch}{1.0}\begin{tabular}[c]{@{}c@{}}\hspace{3mm}EF21M with \\ \hspace{3mm}Compressor \end{tabular}} &
\textbf{\renewcommand{\arraystretch}{1.0}\begin{tabular}[c]{@{}c@{}}\hspace{-4mm}Global Contractive \\ \hspace{-3mm}Property \end{tabular}} & 
\textbf{Primitive} &
\textbf{\renewcommand{\arraystretch}{1.0}\begin{tabular}[c]{@{}c@{}}Communication \\ \hspace{0.5mm} per Iteration $(\downarrow)$ \end{tabular}} &
\textbf{\renewcommand{\arraystretch}{1.0}\begin{tabular}[c]{@{}c@{}}Asymptotic \\ Convergence Rate $(\downarrow)$ \end{tabular}} &
\textbf{\renewcommand{\arraystretch}{1.0}\begin{tabular}[c]{@{}c@{}}Transient \\ Complexity $(\downarrow)$ \end{tabular}} &
\textbf{\renewcommand{\arraystretch}{1.0}\begin{tabular}[c]{@{}c@{}}Empirical \\ Performance \end{tabular}}\\
\midrule
\texttt{\textbf{No Compression}}& \hspace{-4mm}- & \texttt{All-Reduce} & $2mn$  & $\mathcal{O}(\frac{\sigma}{\sqrt{NT}})$ & $\mathcal{O}(\frac{N}{\sigma^2})$ & {\color{green}\tightemoji{\Large \smiley \vspace{-1mm}}} \vspace{-3mm}\\
\midrule
\TopK\  & \hspace{-4mm}\xmark &\texttt{All-Gather} & $(N-1)(nK+K)$ & $\mathcal{O}(\frac{\sigma}{\sqrt{NT}})$ & $\mathcal{O}(\frac{N^3}{\sigma^2})$ & {\color{green}\tightemoji{\Large \smiley \vspace{-1mm}}} \vspace{-2.5mm}\\
\RandK\  & \hspace{-4mm}\cmark &\texttt{All-Reduce} & $2Kn$ & $\mathcal{O}(\frac{\sigma}{\sqrt{NT}})$ & $\mathcal{O}(\frac{N}{\sigma^2})$ & {\color{red}\tightemoji{\Large \frownie \vspace{-1mm}}} \vspace{-2.5mm}\\
\arctopK\  &  \hspace{-4mm}\cmark&\texttt{All-Reduce} & $2Kn +2mr$ & $\mathcal{O}(\frac{\sigma}{\sqrt{NT}})$ & $\mathcal{O}(\frac{N}{\sigma^2})$ & {\color{green}\tightemoji{\Large \smiley \vspace{-1mm}}} \vspace{-2.5mm}\\
\arctopK\ ($r=1$)& \hspace{-4mm}\cmark &\texttt{All-Reduce} & $2Kn +2m$ & $\mathcal{O}(\frac{\sigma}{\sqrt{NT}})$ & $\mathcal{O}(\frac{N}{\sigma^2})$ & {\color{green}\tightemoji{\Large \smiley \vspace{-1mm}}} \vspace{-3mm}\\
\bottomrule\end{tabular}}
\vspace{-7mm}
\label{table:communication_cmp}
\end{center}
\end{table*}

\subsection{Limitations of Existing Communication-Saving Approaches}

Gradient sparsification is a prominent strategy to reduce communication by transmitting only a small fraction of gradient coordinates. The two most common gradient sparsification compressors are \RandK\ and \TopK\ \hspace{-2mm}~\cite{wangni2018gradient}~\cite{stich2018sparsified}. \RandK\ samples $K$ coordinates of the gradient uniformly at random to form an unbiased estimate, whereas \TopK\ preserves the $K$ entries with the largest magnitudes. Despite the apparent trade-off between structure-awareness and statistical simplicity, both methods suffer from significant drawbacks.

\RandK\ employs uniform sampling that disregards gradient structure, often discarding informative coordinates and inducing substantial compression error, which in turn leads to degraded accuracy and slower convergence in practice. In contrast, \TopK\ leverages gradient magnitudes but introduces two key challenges: algorithmically, independent \TopK\ selection across nodes misaligns entry indices, breaking error contraction properties of the global gradient and undermining theoretical convergence guarantees; systemically, the absence of shared indices requires transmitting both values and indices, precluding efficient \texttt{All-Reduce} and forcing slower \texttt{Gather/Scatter} primitives that increase latency and underutilize bandwidth. These motivate the research question:
\vspace{-1mm}
\begin{center}
\setlength\fboxsep{4pt}
\colorbox{gray!20}{\parbox{\dimexpr\linewidth-2\fboxsep\relax}{
\textit{\textbf{(Question)} Can we design a communication-efficient sparsification method that not only \textbf{selects informative gradient components} but also enables the use of \textbf{index-free collectives} (e.g., \texttt{All-Reduce}), achieving both superior \textbf{convergence properties} and strong \textbf{empirical performance}?}
}}
\end{center}

\vspace{-2mm}
\subsection{Main Results}
To address the fundamental question, we propose a novel \textbf{\texttt{{\underline{A}ll-\underline{R}educe-\underline{C}ompatible Top}-$K$}} (\textbf{\texttt{ARC-Top}-$K$}) compressor in this paper. \arctopK\ has three key features: 
\begin{itemize}[leftmargin=1em]
\item \textbf{Informative gradient entry selection.} \arctopK\ adapts to gradient structure and selects statistically informative coordinates. This design ensures performance comparable to \TopK, while achieving higher accuracy and faster convergence than \RandK which uses random sparsification.

\vspace{1mm}
\item \textbf{Superior convergence guarantees.} \arctopK\ maintains the contractive property of the compressed global gradient. Consequently, communication-efficient methods using \arctopK\ as the gradient compressor can achieve faster theoretical convergence rates than using \TopK.

\vspace{1mm}
\item \textbf{High-performance implementation.} By aligning entry indices across nodes, \arctopK\ enables the use of index-free \texttt{All-Reduce} primitives. In contrast, the absence of shared  indices in \TopK\ necessitates slower \texttt{Gather/Scatter} operations, which increase latency and underutilize bandwidth. Our implementation demonstrates that \arctopK\ matches the accuracy of \TopK\ while reducing wall-clock training time by up to 60.69\%.
\end{itemize}

Table~\ref{table:communication_cmp} compares \RandK, \TopK, and \arctopK. The results show that \arctopK\ enables communication-efficient algorithms to achieve the same theoretical convergence rate and communication overhead as \RandK, while matching the empirical performance of \TopK. All reported convergence rates are obtained in combination with EF21M~\cite{fatkhullin2023momentum}, and the transient complexity reflects the number of iterations required for the asymptotic rate term to dominate. In the table, $N$ is the number of nodes, $T$ is the number of iterations, $\sigma$ is the standard deviation of the stochastic gradient noise. Tensors in all algorithm are compressed in 2-dimensional view of $m \times n$ while $K$ represents the number of selected rows in each tensor.

\vspace{0mm}
\noindent \textbf{Notation.}
We introduce the set $[N] := \{1,\cdots, N\}$. Given \(\bm A\in\mathbb{R}^{m\times n}\), operator \(\mathsf{vec}(\cdot)\) returns column-wise vectorization \(\mathsf{vec}(\bm A)\in\mathbb{R}^{mn}\) stacking the rows of \(\bm A\). \(\mathsf{diag}(\cdot)\) returns the vector of diagonal entries \(\mathsf{diag}(\bm M)\in\mathbb{R}^{n}\) for a square matrix \(\bm M\in\mathbb{R}^{n\times n}\). \(\mathsf{arg\,top}_K(\bm z)\) returns the index set of the \(K\) largest entries of \(\bm z\in\mathbb{R}^n\). Given an index set \(\mathcal{I}\) and a matrix \(\bm U\), \([\bm U]_{\mathcal{I},:} \) selects the rows of \(\bm U\) indexed by \(\mathcal{I}\) and zero other rows.

\vspace{0mm}

\begin{figure*}[tb!]
\centering
{\includegraphics[width=5.5in]{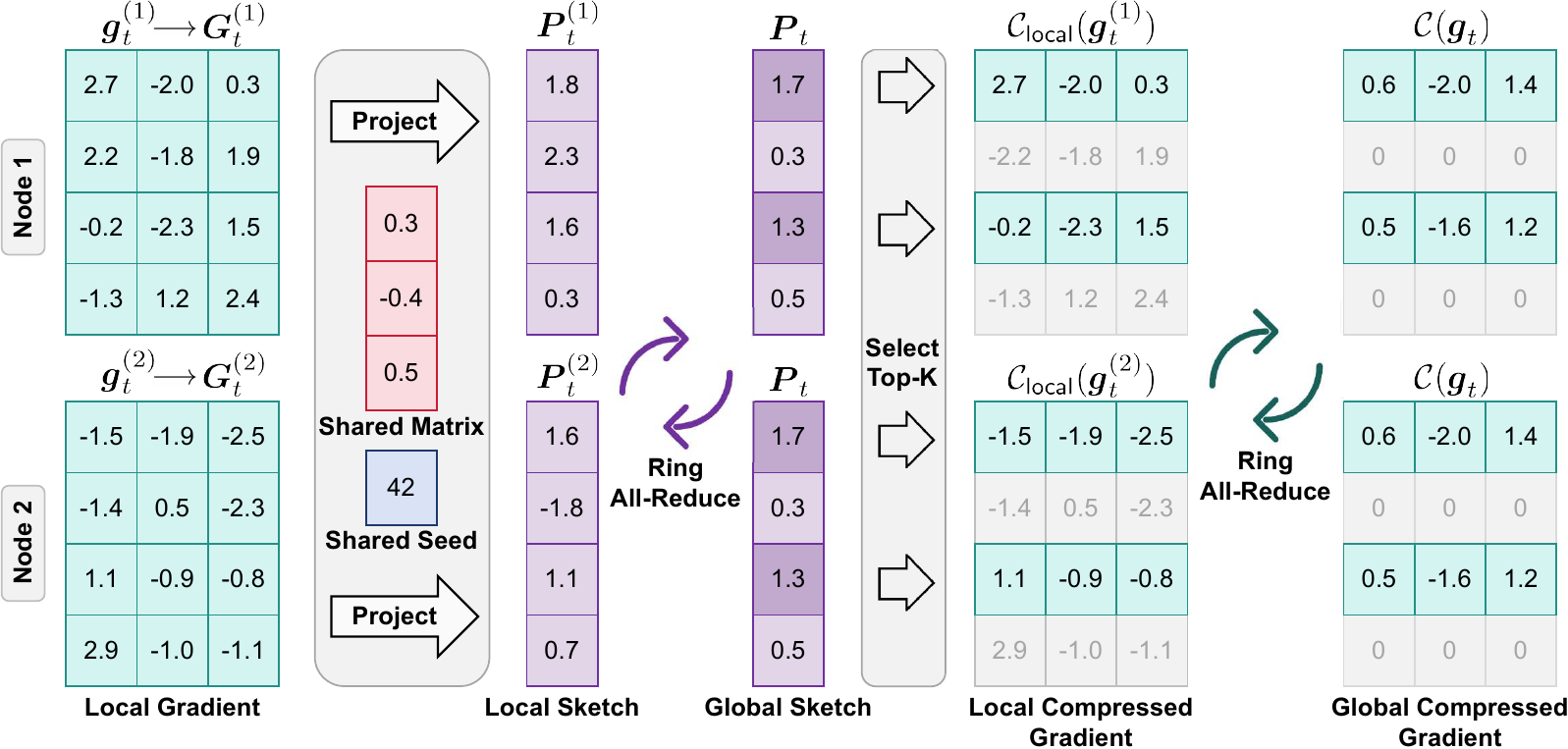}}\vspace{-2mm}
\caption{Workflow of the ARC-Top-$K$ algorithm, detailing the process of gradient compression and aggregation across two nodes in a distributed system.}
\label{fig:arctopk}
\vspace{-5mm}
\end{figure*}

\section{Related Works}

\noindent \textbf{Distributed Learning.} 
Distributed optimization underpins large-scale training in both data-center clusters and cross-device networks~\cite{shi2016edge}~\cite{satyanarayanan2017emergence}~\cite{chilimbi2014project,smola2010architecture,li2014communication,zhou2019edge}. In the \texttt{parameter server} architecture, a central server aggregates gradients from $N$ nodes and broadcasts the averaged update. Federated learning (FL) adapts this worker-server template under privacy and availability constraints, which exacerbate client drift and statistical heterogeneity~\cite{konevcny2016federated}~\cite{mcmahan2017communication}~\cite{kairouz2021advances}. Techniques such as client subsampling, proximal regularization, and adaptive aggregation have been proposed to mitigate these issues~\cite{li2020federated}~\cite{karimireddy2020scaffold}~\cite{wang2020tackling}. By contrast, synchronous data-parallel training in data centers employs collective communication (e.g., \texttt{All-Reduce}) to compute global averages without a central server, enabling the training of frontier large-scale models~\cite{brown2020language,raffel2020exploring,hoffmann2022training,touvron2023llama}. Our work focuses on the communication bottleneck common to both centralized and federated regimes.

\vspace{0mm}
\noindent \textbf{Communication Compression.}
Communication compression reduces data volume by transmitting compact representations of variables, thereby mitigating communication overhead and improving the scalability of distributed training. Existing approaches fall into three main categories: low-rank projection, quantization, and sparsification. Low-rank projection sketches gradients into lower-dimensional subspaces~\cite{vogels2019powersgd,zhao2024galore,chen2025greedy,he2024subspace}. Quantization encodes coordinates with fewer bits; stochastic schemes remain unbiased and integrate well with error feedback~\cite{alistarh2017qsgd}\cite{horvoth2022natural}. Sparsification transmits only a subset of entries. \RandK\  is unbiased but structure-agnostic, often discarding principal components and incurring large compression errors~\cite{wangni2018gradient}\cite{stich2018sparsified}. \TopK\ retains large-magnitude entries and typically yields better training performance, but it breaks \texttt{All-Reduce} compatibility and introduces additional overhead~\cite{lin2017deep}\cite{zhang2023evaluation}.

\vspace{0mm}
\noindent \textbf{Error Feedback.} Error feedback (EF) mitigates the adverse effects of compression by incorporating past residuals into subsequent gradient updates, thereby preserving more informative signals~\cite{seide20141}~\cite{karimireddy2019error}. EF21~\cite{richtarik2021ef21} extends this principle by maintaining a local gradient tracker for each node, which alleviates the impact of data heterogeneity and improves convergence rates. Building on this theoretical foundation, NEOLITHIC and its variants~\cite{huang2022lower}~\cite{he2023unbiased} establish lower bounds for distributed learning under communication compression. Notably, EF21M~\cite{fatkhullin2023momentum} achieves these bounds through momentum, attaining nearly optimal convergence rates. Our \arctopK\ compressor can be seamlessly integrated with EF21M to further enhance convergence in distributed settings.

\section{Limitations of the Top-$K$ Compressor} \label{sec:preliminary}

\vspace{1mm}
\noindent \textbf{Contractive Compressor.} 
Contractive compressors are widely used in communication-efficient distributed optimization. Their key property is that the compression error diminishes in expectation as the underlying variable approaches zero. A formal definition of a contractive compressor is given below:
\begin{definition}[Contractive Compressor]
\label{compressor-assumption}
A compressor $\mathcal{C}(\cdot)$ is contractive if and only if
\begin{align}
    \mathbb{E}_{\mathcal{C}}\Big[ \|\mathcal{C}(\bm{g})-\bm{g}\|_2^2 \Big]\leq (1-\alpha) \|\bm{g}\|_2^2,\quad \forall \bm{g}\in\mathbb{R}^{d},\label{eq:contraction}
\end{align}
where $0 <\alpha \le 1$ is the contractive factor. The expectation is taken over the randomness of the operator $\mathcal{C}$.

\end{definition}

\noindent\textbf{Non-Contraction of \TopK\ in Distributed Learning.} 
Although the classical \TopK\ operator is contractive when applied to a single vector, this property does not extend to distributed learning where the operator is applied independently across multiple nodes and the results are subsequently averaged. Formally, let $\mathcal{C}_\mathsf{local}(\cdot)$ denote the local \TopK\ compressor, ${\bm g} \coloneqq (1/N)\sum_{i=1}^N \bm g^{(i)}$ the uncompressed average, and ${\mathcal{C}}({\bm g}) \coloneqq (1/N)\sum_{i=1}^N \mathcal{C}_\mathsf{local}(\bm g^{(i)})$ the average of compressed vectors. The following proposition demonstrates that ${\mathcal{C}}({\bm g})$ may fail to be contractive in terms of the globally averaged gradient ${\bm g}$. This non-contractive property leads to unfavorable worst-case behavior, preventing algorithms using \TopK\ from achieving convergence rates as fast as those of \RandK\ .
\begin{center}
\setlength\fboxsep{4pt}
\colorbox{gray!20}{\parbox{\dimexpr\linewidth-2\fboxsep\relax}{
\begin{proposition}
\label{prop-non-contractive}
Let $N=2$, $d=2$, and $K=1$. Consider $\bm g^{(1)}=[-1,\,0.1]^{\top}, 
\bm g^{(2)}=[1,\,0.1]^{\top},$ and $\bm g = \tfrac{1}{2}(\bm g^{(1)} + \bm g^{(2)})$. Let $\mathcal{C}(\cdot)$ be \TopK\ compressor defined above. It holds that
\begin{align*}
\|{\mathcal{C}}({\bm g})-{\bm g}\|_2^2=\|{\bm g}\|_2^2,
\end{align*} 
which implies that ${\mathcal{C}}({\bm g})$ is a non-contractive compressor.
\end{proposition}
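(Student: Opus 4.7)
The plan is to verify the identity by direct computation on the given two-dimensional example, and then observe that the only way Definition~\ref{compressor-assumption} can be consistent with $\|\mathcal{C}(\bm g)-\bm g\|_2^2 = \|\bm g\|_2^2$ is to force $\alpha \le 0$, contradicting $\alpha > 0$.

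First I would apply the local \TopK\ operator with $K=1$ to each node's gradient. Since $|-1| > |0.1|$, the operator on $\bm g^{(1)}=[-1,\,0.1]^{\top}$ keeps the first coordinate and zeroes the second, giving $\mathcal{C}_{\mathsf{local}}(\bm g^{(1)})=[-1,\,0]^{\top}$. Analogously, $\mathcal{C}_{\mathsf{local}}(\bm g^{(2)})=[1,\,0]^{\top}$. The key structural feature of the construction is that the large-magnitude coordinates selected by each node are \emph{antipodal}, so they cancel under averaging: $\mathcal{C}(\bm g) = \tfrac{1}{2}\bigl([-1,0]^{\top}+[1,0]^{\top}\bigr) = [0,\,0]^{\top}$, whereas the discarded second coordinate is perfectly aligned across nodes and survives the averaging: $\bm g = \tfrac{1}{2}\bigl([-1,0.1]^{\top}+[1,0.1]^{\top}\bigr) = [0,\,0.1]^{\top}$.

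Next I would compute both sides of the claimed equality. The compression residual is $\mathcal{C}(\bm g)-\bm g = [0,\,-0.1]^{\top}$, whose squared norm is $0.01$. Likewise, $\|\bm g\|_2^2 = 0 + 0.01 = 0.01$, establishing $\|\mathcal{C}(\bm g)-\bm g\|_2^2 = \|\bm g\|_2^2$. Because $\mathcal{C}_{\mathsf{local}}$ acts deterministically on these inputs (the top-1 coordinate is unique in each $\bm g^{(i)}$), the expectation in \eqref{eq:contraction} reduces to the quantity just computed, so inserting this into Definition~\ref{compressor-assumption} would require $1 \le 1-\alpha$, i.e., $\alpha \le 0$, which contradicts the requirement $\alpha > 0$. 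Hence the averaged compressor $\mathcal{C}$ is not contractive in the sense of Definition~\ref{compressor-assumption}.

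There is no real technical obstacle here; the proof is essentially a one-line calculation once the counterexample is fixed. The only substantive content is \emph{designing} the example so that the \TopK\ selections across nodes cancel under averaging while the discarded entries reinforce each other, exposing a worst case that the single-vector contraction property of \TopK\ cannot rule out in the distributed setting. I would close by briefly noting that this is exactly the phenomenon that causes misaligned sparsity patterns to break global contraction, which motivates the index-alignment mechanism in \arctopK\ developed in the subsequent sections.
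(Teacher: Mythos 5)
Your proposal is correct and follows essentially the same route as the paper: apply the local \TopK\ operator to each $\bm g^{(i)}$, observe that the selected coordinates cancel under averaging while the discarded coordinate survives, and verify $\|\mathcal{C}(\bm g)-\bm g\|_2^2=\|\bm g\|_2^2$ by direct computation. Your added remark that this equality forces $\alpha\le 0$ in Definition~\ref{compressor-assumption} merely makes explicit what the paper leaves implicit, so there is nothing to correct.
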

}}
\end{center}
\begin{proof}
\hspace{-0.5mm}Applying $\mathcal{C}_\mathsf{local}$ locally with $K{=}1$ yields \(\mathcal{C}_\mathsf{local}\left(\bm g^{(1)}\right)
=\text{\hspace{-1mm}}[-1,\,0]^{\top},\ \mathcal{C}_\mathsf{local}(\bm g^{(2)})=[1,\,0]^{\top}.
\)
Thus
\(
{\mathcal{C}}(\bm g)=\tfrac{1}{2}\bigl(\mathcal{C}_\mathsf{local}(\bm g^{(1)})+\mathcal{C}_\mathsf{local}(\bm g^{(2)})\bigr)=[0,\,0]^{\top},
\)
whereas
\(
{\bm g}=\tfrac{1}{2}(\bm g^{(1)}+\bm g^{(2)})=[0,\,0.1]^{\top}.
\)
Thus, it holds
\(
\|{\mathcal{C}}({\bm g})-{\bm g}\|_2^2=\|{\bm g}\|_2^2.
\)
\end{proof}

\vspace{-1mm}
\noindent \textbf{System-Level Limitations of \TopK.} \TopK\ also imposes significant communication overhead. 
Because the selected supports differ across nodes, indices must be transmitted together with values, 
increasing communication sizes and necessitating gather-and-merge operations. 
In parameter--server systems, aggregation ${\bm g} \coloneqq (1/N)\sum_{i=1}^N \bm g^{(i)}$ typically densifies the global gradient, requiring full-size communication from the server to all nodes. 
In collective-based clusters, heterogeneous supports preclude bandwidth-optimal \texttt{All-Reduce}, 
forcing implementations to fall back to variable-length \texttt{All-Gather} followed by a merge step, 
which underutilizes high-throughput collectives and increases memory traffic.

\section{All-Reduce Compatible Top\texorpdfstring{$K$}{K}}

\noindent \textbf{Compressor Design.} 
The key idea of \arctopK\ is to align sparsity patterns across all nodes while retaining the most significant entries of the compressed vector. To achieve this, \arctopK\ first reshapes the gradient vector into a matrix:
\begin{align}
\label{z72habsd00111}
\bm g_t^{(i)} \in \mathbb{R}^d \;\;\longrightarrow\;\; \bm G_t^{(i)} \in \mathbb{R}^{m \times n}, \quad n = d/m.
\end{align}
At iteration $t$, all $N$ nodes synchronize a random seed $s$, which deterministically generates a shared Gaussian projection matrix $
\bm V \in \mathbb{R}^{n \times r}, \text{with } \mathrm{vec}(\bm V) \sim \mathcal{N}(\bm 0, \bm I_{nr})$. Next, a row-wise sketch of the global gradient can be achieved through: 
\[
\bm P_t^{(i)} = \frac{1}{\sqrt{r}} \bm G_t^{(i)} \bm V \in \mathbb{R}^{m \times r},  \quad \bm P_t = \frac{1}{N} \sum_{i=1}^N \bm P_t^{(i)}.
\]
Row importance is then estimated as
\begin{align}
\label{zn28373}
\bm \Sigma_t = \mathrm{diag}(\bm P_t \bm P_t^\top) \in \mathbb{R}^m, \quad \mathcal{I}_t = \mathrm{arg\,top}_K(\bm \Sigma_t),
\end{align}
and $\mathcal{I}_t$ is the index set of the top-$K$ important rows. Next we aggregate these important rows from each node 
\begin{align}
\label{2zn20}
[\bm G_t]_{\mathcal{I}_t,:} = \frac{1}{N} \sum_{i=1}^N [\bm G_t^{(i)}]_{\mathcal{I}_t,:} \in \mathbb{R}^{m\times n}.
\end{align}

All unselected rows are set to zero in $\bm G_t$, and the result is reshaped back into $\hat{\bm g}_t = \mathrm{vec}(\bm G_t) \in \mathbb{R}^d$. Such $\hat{\bm g}_t$ is regarded as a compressed estimate of the globally averaged gradient ${\bm g}_t = (1/N)\sum_{i=1}^N \bm g_t^{(i)}$ using \arctopK, see Algorithm~\ref{alg:ar_topk} for more implementation details. 

\vspace{1mm}
\noindent \textbf{\arctopK\ Captures Important Entries.} Our key idea is that each $p$-th element of $\bm \Sigma_t \in \mathbb{R}^m$ provides an estimate of the importance of the $p$-th row of the global gradient matrix
$
\bm G_t = \tfrac{1}{N}\sum_{i=1}^N \bm G_t^{(i)}
$
in expectation. To see it, let $\bm u^{(i)} \in \mathbb{R}^{1\times n}$ be the $p$-th row of $\bm G_t^{(i)}$ and define $\bm u = \tfrac{1}{N}\sum_{i=1}^N \bm u^{(i)}$:
\vspace{-2mm}
\begin{align}
\label{z72ena}
\mathbb{E}[\bm \Sigma_{t,p}]
= \frac{1}{r}\mathbb{E}\!\left[\|\bm u \bm V\|_2^2\right]
= \frac{1}{r}\sum_{j=1}^r \mathbb{E}\!\left[(\bm u \bm v_j)^2\right]
= \|\bm u\|_2^2,
\end{align}
where $\bm \Sigma_{t,p}$ is the $p$-the element in $\bm \Sigma_t$. The expectation is taken over the random projection matrix $\bm V = [\bm v_1, \dots, \bm v_r]\in \mathbb{R}^{n \times r}$, whose columns are sampled i.i.d.\ from $\mathcal{N}(0,I_n)$. Equality~\eqref{z72ena} shows that $\bm \Sigma_{t,p}$ provides an estimate of the squared norm of the $p$-th row of $\bm G_t$, while the hyperparameter $r$ controls the variance of this estimate, with larger values of $r$ yielding smaller estimate variance. For this reason, expressions~\eqref{zn28373}--\eqref{z72ena} imply that \arctopK\ preserves the most significant rows while zeroing out the less significant ones.

\begin{algorithm}[tb!]
    \caption{\(\mathsf{\bf ARC}\mbox{-}\mathsf{\bf Top}\mbox{-}\bm{K}\big(\{\bm g_t^{(i)}\}_{i=1}^N,\, r,\, m,\, \mu\big)\).}
    \label{alg:ar_topk}
    \vspace{1pt}
    \textbf{Input}: $N$ nodes; projection dimension $r$; number of rows $m$; compressor ratio $\mu$ with selected row number  $K = \lceil \mu m \rceil$; local variable \(\bm g_t^{(i)}\in\mathbb{R}^{d}\) at iteration $t$ for \(i\in[N]\).\\
    \textbf{Output}: Compressed local gradient $\mathcal{C}_\mathsf{local}(\bm{g}_t^{(i)})$ and global gradient $\mathcal{C}(\bm g_t)$ with \(\bm{g}_t \coloneqq \sum_{i=1}^N\bm{g}_t^{(i)}\).\\
    \vspace{1mm}
    \textbf{(On $i$-th node)}\\
    Sample random seed \(s\) and synchronize it across nodes.\\
    Reshape \hspace{-0.5mm}\(\bm{G}_t^{(i)} \in \mathbb{R}^{m\times n}\ \text{s.t.}\ \mathsf{vec}(\bm{G}_t^{(i)})=\bm{g}_t^{(i)},\, n=d/m.\) \\
    \(\text{Generate } \bm V\in\mathbb{R}^{n\times r} \text{ with } s \text{ s.t. } \mathsf{vec}(\bm V)\sim\mathcal{N}(\bm 0,\bm I_{nr})\).\\
    \(\bm P_t^{(i)} \gets \bm G_t^{(i)}\,\bm V \in \mathbb{R}^{m\times r}\).\\
    \vspace{0.5mm}\(\bm P_t \gets \frac{1}{N}\sum_{i=1}^{N}\bm P_t^{(i)}\). \hfill (\texttt{All-Reduce})\\
    \vspace{0.5mm}\( \bm{\Sigma}_t \gets \mathsf{diag}\!\bigl(\bm{P}_t\bm{P}_t^{\top}\bigr) \in \mathbb{R}^m,\ \mathcal{I}_t \gets \mathsf{arg\, top}_K(\bm\Sigma_t)\in\mathbb{Z}^{K}\). \vspace{0.5mm}\\
    \(\mathcal{C}_\mathsf{local}(\bm{g}_t^{(i)}) \coloneqq \mathsf{vec}([\bm G_t^{(i)}]_{\mathcal{I}_t,:})\).\\
    \(\mathcal{C}(\bm{g}_t) \coloneqq \frac{1}{N} \sum_{i=1}^N\mathcal{C}_\mathsf{local}(\bm{g}_t^{(i)})\). \hfill (\texttt{All-Reduce})\\
    \Return $\mathcal{C}({\bm g}_t)$ and $\mathcal{C}_\mathsf{local}(\bm{g}_t^{(i)})$ for $i\in[N]$
\end{algorithm}

\vspace{1mm}
\noindent \textbf{\arctopK\ is a Contractive Compressor.} As shown in Proposition~\ref{prop-non-contractive}, the \TopK\ compressor is not contractive in distributed learning with respect to the globally averaged gradient. In contrast, the following proposition establishes that \arctopK\ is globally contractive, providing the foundation for its superior convergence properties.

\vspace{-1mm}
\begin{center}\setlength\fboxsep{1pt}\colorbox{gray!20}{\parbox{\dimexpr\linewidth-2\fboxsep\relax}{
\begin{proposition}
Let \({\bm g}_t \coloneqq \frac{1}{N}\sum_{i=1}^N{\bm g}_t^{(i)}\) be the global variable and let \(\mathcal{C}(\bm g_t)\) be the output of Algorithm~\ref{alg:ar_topk}. Then
\begin{align}
\mathbb{E}_\mathcal{C}\!\left[\|{\mathcal{C}}({\bm g}_t)-{\bm g}_t\|_2^2 \right]\le(1-\alpha)\,\|{\bm g}_t\|_2^2,\quad \alpha=\tfrac{K}{m},\nonumber
\end{align}
where the expectation is taken over the randomness of \(\mathcal{C}\).
\end{proposition}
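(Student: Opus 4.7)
The plan is to express the compression error as a sum over unselected rows of $\bm G_t$ and then exploit the unbiasedness of the Gaussian sketch to show that the top-$K$ selection is stochastically biased toward rows of larger squared norm, reducing the bound to a rearrangement-type inequality.

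First I would make the error explicit. Since $\mathcal{C}(\bm g_t) = \mathsf{vec}([\bm G_t]_{\mathcal{I}_t,:})$ keeps only the rows indexed by $\mathcal{I}_t$ and zeros the rest,
\[
\|\mathcal{C}(\bm g_t) - \bm g_t\|_2^2 \;=\; \sum_{p\notin\mathcal{I}_t} a_p, \qquad a_p := \|\bm u_p\|_2^2,
\]
where $\bm u_p$ denotes the $p$-th row of $\bm G_t$ and $\|\bm g_t\|_2^2 = \sum_{p=1}^m a_p$. Taking expectation and using the identity $\sum_p \Pr(p\in\mathcal{I}_t) = K$, the target bound becomes equivalent to
\[
\sum_p a_p\,\Pr(p\in\mathcal{I}_t) \;\ge\; \frac{K}{m}\sum_p a_p.
\]

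Next I would invoke the unbiasedness already established in~\eqref{z72ena}, namely $\mathbb{E}[\bm\Sigma_{t,p}] \propto a_p$. Writing $\bm\Sigma_{t,p} = a_p Y_p$ with $Y_p = \|\hat{\bm u}_p \bm V\|_2^2/r$ and $\hat{\bm u}_p = \bm u_p/\sqrt{a_p}$, rotation invariance of the Gaussian $\bm V$ gives every $Y_p$ the same marginal $\chi^2_r/r$ distribution, so the rule $\mathcal{I}_t = \mathsf{arg\,top}_K(a_p Y_p)$ is stochastically biased toward indices with larger $a_p$. From this bias the inequality above would follow via Chebyshev's sum inequality (or the rearrangement inequality), once one knows $p\mapsto\Pr(p\in\mathcal{I}_t)$ is comonotone with $p\mapsto a_p$.

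The main obstacle is formalizing this comonotonicity rigorously. Because the scores $\{\bm\Sigma_{t,p}\}_{p=1}^m$ are coupled through the shared random projection $\bm V$, a direct iid order-statistics argument is unavailable. The cleanest route is a single-row monotone coupling: scaling any row $\bm u_p$ by $c>1$ multiplies $\bm\Sigma_{t,p}$ by $c^2$ while leaving every other $\bm\Sigma_{t,l}$ unchanged, so $\Pr(p\in\mathcal{I}_t)$ is weakly increasing in $a_p$ whereas $\Pr(l\in\mathcal{I}_t)$ is weakly decreasing in $a_p$ for $l\ne p$. Iterating this componentwise monotonicity and combining with $\sum_p\Pr(p\in\mathcal{I}_t)=K$ yields the required concordance, and hence the contraction factor $\alpha=K/m$.
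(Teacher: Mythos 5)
Your reduction matches the paper's: both write the compression error as the energy of the unselected rows, use $\sum_p\Pr(p\in\mathcal{I}_t)=K$ to reduce the claim to $\sum_p a_p\,\Pr(p\in\mathcal{I}_t)\ge\tfrac{K}{m}\sum_p a_p$, and finish with Chebyshev's sum inequality once the selection probabilities $p\mapsto\Pr(p\in\mathcal{I}_t)$ are known to be similarly ordered with the row energies $p\mapsto a_p$. The paper does not re-prove this comonotonicity; it imports it as Lemma~1 of the companion reference on greedy projection. You rightly flag it as the crux of the argument, but the coupling you propose does not establish it.

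The single-row coupling shows only a \emph{perturbation} monotonicity: freezing all other rows, inflating $a_p$ raises $\Pr(p\in\mathcal{I}_t)$ and lowers $\Pr(l\in\mathcal{I}_t)$ for $l\ne p$. Chebyshev, however, needs a \emph{cross-index} comparison, $a_l\le a_j\Rightarrow p_l\le p_j$, for a \emph{fixed} $\bm G_t$. To pass from the former to the latter by ``iterating the componentwise monotonicity,'' you would need the base case $a_l=a_j\Rightarrow p_l=p_j$, i.e.\ exchangeability of indices with equal norm---and that fails here. The scores $\bm\Sigma_{t,p}=a_pY_p$ are driven by a \emph{shared} $\bm V$, so the joint law of $(Y_1,\dots,Y_m)$ depends on the directions $\hat{\bm u}_p$, not only on the norms: if $\hat{\bm u}_j$ is (nearly) collinear with a longer row while $\hat{\bm u}_l$ is orthogonal to the rest, then $p_j$ can be essentially zero while $p_l$ stays bounded away from zero even when $a_l\le a_j$. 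The identical $\chi^2_r/r$ marginals of the $Y_p$ are therefore not enough; the assertion that top-$K$ selection is ``stochastically biased toward larger $a_p$'' is precisely what must be proved and is sensitive to the joint law, not the marginals. As written this step is a gap; to close it you should either cite the lemma the paper invokes or give a comonotonicity argument that genuinely accounts for the directional dependence of the sketch scores.
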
}}\end{center}
\begin{proof}
By the reshaping definition of \(\bm G_t\) in \eqref{z72habsd00111}, we have
\begin{align}
\|{\bm g}_t\|_2^2 &= \|{\bm G}_t\|_F^2, \label{eq:contract_proof_1}\\
\|{\mathcal{C}}({\bm g}_t)-{\bm g}_t\|_2^2 &= \|[\bm G_t]_{\mathcal{I}_t,:}-{\bm G}_t\|_F^2.\label{eq:contract_proof_2}
\end{align}
Let \(p_j \!\coloneqq\! \mathbb{P}[j \in \mathcal{I}_t]\) be the selection probability of the \(j\)-th {row}. According to the Lemma 1 in \cite{chen2025greedy},
\begin{align}
\| {\bm G}_t[l,:] \|_2^2 \le \| {\bm G}_t[j,:] \|_2^2 \Rightarrow p_l \le p_j,\ \quad \forall l,j \in [m].
\end{align}
Then by Chebyshev inequality, it holds
\begin{align}
&\mathbb{E}_\mathcal{C}\!\left[ \left\| [{\bm G}_t]_{\mathcal{I}_t,:} - {\bm G}_t \right\|_F^2 \right] 
= \| {\bm G}_t \|_F^2 - \sum_{j=1}^m p_j\, \| [{\bm G}_t]_{j,:} \|_2^2 \nonumber\\
\le& \| {\bm G}_t \|_F^2 - \frac{1}{m}\!\left(\sum_{j=1}^m p_j\right)\!\left(\sum_{j=1}^m \| [{\bm G}_t]_{j,:} \|_2^2\right)\nonumber\\
=& \| {\bm G}_t \|_F^2 - \frac{K}{m}\, \| {\bm G}_t \|_F^2.\label{eq:contract_proof_3}\end{align}

\noindent Substituting \eqref{eq:contract_proof_1} and \eqref{eq:contract_proof_2} into \eqref{eq:contract_proof_3} completes the proof.
\end{proof}

\noindent \textbf{\arctopK\ can be Efficiently Implemented.} By aligning sparsity patterns across nodes, \arctopK\ eliminates the need to transmit indices and enables the use of bandwidth-optimal \texttt{All-Reduce} rather than the more costly \texttt{All-Gather} (see Algorithm~\ref{alg:ar_topk}). This design achieves both high-fidelity gradient preservation and low communication cost, making \arctopK\ particularly effective for large-scale distributed learning. Moreover, in deep learning tasks where model weights and gradients are naturally stored in matrix form, the vectorization step required by \arctopK\ can be omitted, further simplifying its implementation.

\vspace{1mm}
\noindent \textbf{Communication Cost of \arctopK.} We analyze the communication overhead of row-sparsifying updates for an $m \times n$ parameter block replicated across $N$ nodes. At iteration $t$, each node selects $K$ rows, yielding sparsity $\mu \coloneqq K/m$, and transmits compressed updates through a collective primitive. In \arctopK, a compact rank-$r$ sketch is additionally exchanged to \emph{align} the selected row sets across nodes. ``Communication per iteration'' refers to the number of scalar entries transmitted per node per iteration. As summarized in Table~\ref{table:communication_cmp}, the dense baseline requires $2mn$ entries using \texttt{All-Reduce}. Independent \TopK\ instead relies on \texttt{All-Gather} to merge both values and indices, which incurs $(N-1)(nK + K)$ entries and fails to preserve global contractivity. By contrast, \RandK\ achieves global contractivity in expectation while communicating $2Kn$ entries via \texttt{All-Reduce}. Finally, \arctopK\ restores global contractivity and maintains index-free \texttt{All-Reduce} through the shared sketch, with total communication volume $2Kn + 2mr$, which further reduces to $2Kn + 2m$ when $r=1$.

\section{Communication-Efficient Method using ARC-Top-$K$}
\label{sec:arctopk}
The state-of-the-art communication-efficient method to solve problem \eqref{prop-non-contractive} is \textbf{\underline{M}omentum SGD with \underline{E}rror \underline{F}eedback (EF21M)}~\cite{fatkhullin2023momentum}, which corrects compression bias by recycling residual information. Equipping EF21M with \arctopK, we acheive the following recursions: 
\begin{subequations}\label{eq:ef21m}
\begin{align}
\bm{h}_t^{(i)} &= (1-\eta)\,\bm{h}_{t-1}^{(i)} + \eta\nabla F_i(\bm{x}_t, \bm{\xi}_t^{(i)}), \label{eq:ef21m-1} \\
\bm{g}_t^{(i)} &= \bm{g}_{t-1}^{(i)} + \mathcal{C}_\mathsf{local}\!\left(\bm{h}_{t}^{(i)}-\bm{g}_{t-1}^{(i)}\right), \label{eq:ef21m-2} \\
\bm{x}_{t+1} &= \bm{x}_t - \frac{\gamma}{N}\sum_{i=1}^N \bm{g}_t^{(i)}, \label{eq:ef21m-3}
\end{align}
\end{subequations}
where compressor $\mathcal{C}_\mathsf{local}(\cdot)$ is realized through \arctopK. Step~\eqref{eq:ef21m-1} updates the local gradient tracker, which serves as a momentum term to smooth stochastic gradient estimates. Step~\eqref{eq:ef21m-2} performs error feedback by compressing only the difference between this tracker and the previously transmitted vector using \arctopK, thereby retaining critical gradient information while reducing communication overhead. Because all nodes employ the same synchronized sketch, the resulting sparsity patterns are aligned across nodes, enabling efficient index-free \texttt{All-Reduce} aggregation in step \eqref{eq:ef21m-3}. 

Vanilla EF21M~\cite{fatkhullin2023momentum} employs \TopK\ as the compressor in step~\eqref{eq:ef21m-2}. Since \arctopK\ preserves the contraction property, we next establish that replacing \TopK\ with \arctopK\ enables EF21M to achieve faster theoretical convergence rates than its vanilla counterpart.

\begin{assumption}[Smoothness and lower boundedness]\label{asp:LL}
We assume  $f$ is $L$-smooth, $\|\nabla f(\bm{x}) - \nabla f(\bm{y})\| \le L \|\bm{x} - \bm{y}\|,\ \forall\bm{x},\bm{y} \in \mathbb{R}^d$.
Moreover, we assume that $f$ is lower bounded, i.e.,
$f^* := \inf_{\bm{x} \in \mathbb{R}^d} f(\bm{x}) > -\infty$.
\end{assumption}

\begin{assumption}[Stochastic gradient oracle]\label{asp:SGO}
For each $i \in [N]$, the stochastic gradient oracle $\nabla F_i(\bm{x}; \bm{\xi})$ is unbiased and has bounded variance, i.e.,
$\mathbb{E}_{\bm{\xi} \sim \mathcal{D}_i}[\nabla F_i(\bm{x}; \bm{\xi})] = \nabla f_i(\bm{x})$,
$\mathbb{E}_{\bm{\xi} \sim \mathcal{D}_i}\big[ \|\nabla F_i(\bm{x}; \bm{\xi}) - \nabla f_i(\bm{x})\|^2 \big] \le \sigma^2$.
\end{assumption}
\noindent We need several lemmas to facilitate the convergence analysis. 
\begin{lemma}\label{lm:lemma1}
Suppose Assumption~\ref{asp:LL} holds, and let the iterate be updated as 
\(\bm{x}_{t+1} = \bm{x}_{t} - \gamma \bm{g}_{t}\) 
for some vector \(\bm{g}_{t} \in \mathbb{R}^{d}\) and step size \(\gamma > 0\). Then the following inequality holds:
\begin{align}
f(\bm{x}_{t+1}) 
&\le f(\bm{x}_{t}) 
   - \frac{\gamma}{2}\|\nabla f(\bm{x}_{t})\|^{2} 
   - \frac{1}{4\gamma}\|\bm{x}_{t+1} - \bm{x}_{t}\|^{2} \nonumber\\
&\quad + \frac{\gamma}{2}\|\bm{g}_{t} - \nabla f(\bm{x}_{t})\|^{2}.
\end{align}
\end{lemma}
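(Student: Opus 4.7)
The plan is to derive the inequality from the standard $L$-smoothness descent inequality combined with the polarization identity, and then convert the resulting $\|\bm g_t\|^2$ term into $\|\bm x_{t+1}-\bm x_t\|^2$ via the update rule. First, I would invoke Assumption~\ref{asp:LL} to write
\begin{align*}
f(\bm x_{t+1}) \le f(\bm x_t) + \langle \nabla f(\bm x_t),\, \bm x_{t+1}-\bm x_t\rangle + \frac{L}{2}\|\bm x_{t+1}-\bm x_t\|^2,
\end{align*}
and substitute $\bm x_{t+1}-\bm x_t = -\gamma \bm g_t$ so that the cross term becomes $-\gamma\langle \nabla f(\bm x_t),\bm g_t\rangle$.

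Next, I would apply the polarization identity $-\langle a,b\rangle = \tfrac{1}{2}\|a-b\|^2 - \tfrac{1}{2}\|a\|^2 - \tfrac{1}{2}\|b\|^2$ to the inner product, yielding
\begin{align*}
f(\bm x_{t+1}) \le f(\bm x_t) &- \frac{\gamma}{2}\|\nabla f(\bm x_t)\|^2 + \frac{\gamma}{2}\|\bm g_t-\nabla f(\bm x_t)\|^2 \\
& - \frac{\gamma}{2}(1-L\gamma)\|\bm g_t\|^2.
\end{align*}
Using $\|\bm g_t\|^2 = \tfrac{1}{\gamma^2}\|\bm x_{t+1}-\bm x_t\|^2$, the last term becomes $-\tfrac{1-L\gamma}{2\gamma}\|\bm x_{t+1}-\bm x_t\|^2$, which matches the desired $-\tfrac{1}{4\gamma}\|\bm x_{t+1}-\bm x_t\|^2$ provided $1-L\gamma \ge \tfrac{1}{2}$, i.e., $\gamma \le \tfrac{1}{2L}$. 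I would state this step-size restriction explicitly (it is the implicit hypothesis under which such a descent lemma is always applied in the EF21M analysis) and upper bound the coefficient to conclude.

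The calculation is essentially routine, so the only subtlety I anticipate is the missing explicit step-size condition in the lemma statement. I would handle this by noting that the inequality is intended to be used with $\gamma \le 1/(2L)$ in the subsequent convergence theorem, which is the standard regime for smooth-descent arguments; under this regime the coefficient on $\|\bm x_{t+1}-\bm x_t\|^2$ is at most $-\tfrac{1}{4\gamma}$, and the stated bound follows immediately.
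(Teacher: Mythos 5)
Your derivation is correct and is essentially the standard one: the paper itself defers this proof to the EF21M reference, and that reference proves it exactly as you do, via the $L$-smoothness descent inequality, the polarization identity $-\langle a,b\rangle = \tfrac12\|a-b\|^2 - \tfrac12\|a\|^2 - \tfrac12\|b\|^2$, and the substitution $\|\bm g_t\|^2 = \gamma^{-2}\|\bm x_{t+1}-\bm x_t\|^2$. Your observation about the missing step-size hypothesis is also right, and worth stressing: as stated, with only ``$\gamma>0$,'' the lemma is false (take $f(x)=\tfrac{L}{2}x^2$ and $\bm g_t=\nabla f(\bm x_t)$; the inequality reduces to $\gamma L\le \tfrac12$). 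The bound $-\tfrac{1-L\gamma}{2\gamma}\le -\tfrac{1}{4\gamma}$ requires $\gamma\le \tfrac{1}{2L}$, which the paper only imposes later in Theorem~\ref{thm:cov-msgd} (where $\gamma\le 1/(4L)$). Making that condition explicit in the lemma, as you propose, is the right fix.
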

\begin{lemma}\label{lm:lemma2}
Suppose Assumption~\ref{asp:LL} holds, and let $\mathcal{C}$ be a contractive compressor with parameter $\alpha \le \tfrac{1}{2}$. Let the averaged sequence be
$\bm{h}_t \coloneqq \frac{1}{N}\sum_{i=1}^N \bm{h}_t^{(i)}$, $\bm{g}_t \coloneqq \frac{1}{N}\sum_{i=1}^N \bm{g}_t^{(i)}$ and $\nabla F(\bm{x}_{t}, \bm{\xi}_{t}) \coloneqq \frac{1}{N}\sum_{i=1}^N\nabla F_i(\bm{x}_{t}, \bm{\xi}_{t}^{(i)})$.Consider
\begin{align*}
    \bm{h}_{t} &= \bm{h}_{t-1} + \eta \bigl(\nabla F(\bm{x}_{t}, \bm{\xi}_{t}) - \bm{h}_{t-1} \bigr), \\
    \bm{g}_{t} &= \bm{g}_{t-1} + \mathcal{C}\bigl(\bm{h}_{t} - \bm{g}_{t-1}\bigr).
\end{align*}
Then the following inequality holds:
{
\begin{align}
\mathbb{E}\!\left[\|\bm{g}_t - \bm{h}_t\|^2\right]
&\le \left(1 - \tfrac{\alpha}{2}\right) 
   \mathbb{E}\!\left[\|\bm{g}_{t-1} - \bm{h}_{t-1}\|^2\right]+ \tfrac{\eta^2 \sigma^2}{N}\nonumber  \\
&\quad + \frac{4\eta^2}{\alpha}\, \mathbb{E}\!\left[\|\bm{h}_{t-1} - \nabla f(\bm{x}_{t-1})\|^2\right] \nonumber \\
&\quad + \frac{4L^2 \eta^2}{\alpha}\, \mathbb{E}\!\left[\|\bm{x}_t - \bm{x}_{t-1}\|^2\right]. \label{eq:lemma2} 
\end{align}
}
\end{lemma}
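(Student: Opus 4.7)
The plan is to peel the bound open in three conceptually distinct stages: (i) apply the contractive property of $\mathcal{C}$ to move from $\bm g_t-\bm h_t$ to $\bm h_t-\bm g_{t-1}$, (ii) isolate the stochastic noise so that the variance contribution appears, and (iii) use Young's inequality together with $L$-smoothness to convert the remaining deterministic term into the two quantities indexed by $t-1$ that appear on the right-hand side of~\eqref{eq:lemma2}.

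First I would note that $\bm g_t-\bm h_t=\mathcal{C}(\bm h_t-\bm g_{t-1})-(\bm h_t-\bm g_{t-1})$, so Definition~\ref{compressor-assumption} gives $\mathbb{E}_\mathcal{C}\|\bm g_t-\bm h_t\|^2\le(1-\alpha)\|\bm h_t-\bm g_{t-1}\|^2$. Next I would decompose $\bm h_t-\bm g_{t-1}=\eta\bm\Delta_t+\eta(\nabla f(\bm x_t)-\bm h_{t-1})+(\bm h_{t-1}-\bm g_{t-1})$, where $\bm\Delta_t\coloneqq\nabla F(\bm x_t,\bm\xi_t)-\nabla f(\bm x_t)$ is conditionally zero-mean and, because the samples $\bm\xi_t^{(i)}$ are independent across nodes with per-node variance at most $\sigma^2$ (Assumption~\ref{asp:SGO}), satisfies $\mathbb{E}\|\bm\Delta_t\|^2\le\sigma^2/N$. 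Taking expectation over $\bm\xi_t$ while conditioning on the past splits $\|\bm h_t-\bm g_{t-1}\|^2$ into a variance piece that contributes exactly $\eta^2\sigma^2/N$ (after absorbing the $(1-\alpha)\le 1$ factor) and a deterministic remainder $\|\eta(\nabla f(\bm x_t)-\bm h_{t-1})+(\bm h_{t-1}-\bm g_{t-1})\|^2$.

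The heart of the argument is a Young-inequality split on this deterministic remainder. I would choose the Young parameter $\beta=\alpha/(2(1-\alpha))$ so that $(1-\alpha)(1+\beta)=1-\alpha/2$ exactly, producing the advertised contraction factor on $\mathbb{E}\|\bm g_{t-1}-\bm h_{t-1}\|^2$. The complementary coefficient $(1-\alpha)(1+1/\beta)=(1-\alpha)(2-\alpha)/\alpha$ is bounded by $2/\alpha$, and the assumption $\alpha\le 1/2$ keeps the algebra clean. To finish I would add and subtract $\nabla f(\bm x_{t-1})$ inside $\|\nabla f(\bm x_t)-\bm h_{t-1}\|^2$, apply Young's inequality with parameter $1$, and then invoke $L$-smoothness from Assumption~\ref{asp:LL} to replace $\|\nabla f(\bm x_t)-\nabla f(\bm x_{t-1})\|^2$ by $L^2\|\bm x_t-\bm x_{t-1}\|^2$. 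Multiplying through by the $2/\alpha$ prefactor produces the $4\eta^2/\alpha$ and $4L^2\eta^2/\alpha$ coefficients in~\eqref{eq:lemma2}.

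The main obstacle is the bookkeeping on the Young parameter $\beta$: a more aggressive split sharpens the contraction factor but blows up the residual coefficient, while a milder split leaves the contraction weaker than $1-\alpha/2$. The specific choice $\beta=\alpha/(2(1-\alpha))$, together with $\alpha\le 1/2$, is what makes all three constants $(1-\alpha/2,\,4\eta^2/\alpha,\,4L^2\eta^2/\alpha)$ fit simultaneously; identifying this balance is the only non-mechanical step, after which the remaining manipulations are routine Young and smoothness estimates.
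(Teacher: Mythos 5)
Your proposal is correct, and since the paper explicitly omits the proof of Lemma~\ref{lm:lemma2} for space, it follows what is clearly the intended and natural line of argument: (i) contraction of $\mathcal{C}$ applied to $\bm h_t-\bm g_{t-1}$, (ii) conditional-expectation split that isolates the $\eta^2\sigma^2/N$ variance term (valid since $\bm\Delta_t$ is conditionally zero-mean and independent across nodes), and (iii) Young's inequality with $\beta=\alpha/(2(1-\alpha))$ plus $L$-smoothness to produce the three constants exactly. One small correction: the hypothesis $\alpha\le 1/2$ is not actually needed for your constants to work — the inequality $(1-\alpha)(2-\alpha)/\alpha\le 2/\alpha$ holds for every $\alpha\in(0,1]$, since it reduces to $\alpha^2-3\alpha\le 0$; the restriction in the lemma statement is therefore not load-bearing for your choice of Young parameter, so your remark that $\alpha\le 1/2$ is "what makes all three constants fit simultaneously" slightly overstates its role.
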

\begin{lemma}\label{lm:lemma3}
Suppose Assumption~\ref{asp:LL} holds, and let $0 < \eta \le 1$. For each $i \in \{1,\ldots,N\}$, define the sequence $\{\bm{h}_t^{(i)}\}_{t \ge 0}$ by the recursion
$\bm{h}_t^{(i)} = \bm{h}_{t-1}^{(i)} + \eta \bigl(\nabla F_i(\bm{x}_t, \bm{\xi}_t^{(i)}) - \bm{h}_{t-1}^{(i)}\bigr).$

Then, for all $t \ge 0$, it holds that
\begin{align}
\mathbb{E}\!\left[\bigl\|\bm{h}_{t+1} - \nabla f(\bm{x}_{t+1})\bigr\|^2\right]
&\le (1-\eta)\, \mathbb{E}\!\left[\bigl\|\bm{h}_t - \nabla f(\bm{x}_t)\bigr\|^2\right] \nonumber \\
&\hspace{-2cm} + \frac{3L^2}{\eta}\, \mathbb{E}\!\left[\|\bm{x}_{t+1}-\bm{x}_t\|^2\right]
   + \frac{\eta^2 \sigma^2}{N}.
\end{align}
\end{lemma}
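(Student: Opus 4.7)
The proof proceeds by analyzing the averaged momentum tracker $\bm{h}_t \coloneqq \frac{1}{N}\sum_i \bm{h}_t^{(i)}$ through an orthogonal decomposition into a zero-mean stochastic part and a deterministic drift part. First I would average the per-node recursion to obtain the compact form $\bm{h}_{t+1} = (1-\eta)\bm{h}_t + \eta\nabla F(\bm{x}_{t+1}, \bm{\xi}_{t+1})$, and then subtract $\nabla f(\bm{x}_{t+1})$ from both sides to produce the decomposition
\begin{align*}
\bm{h}_{t+1} - \nabla f(\bm{x}_{t+1})
&= (1-\eta)\bigl(\bm{h}_t - \nabla f(\bm{x}_t)\bigr) \\
&\quad + (1-\eta)\bigl(\nabla f(\bm{x}_t) - \nabla f(\bm{x}_{t+1})\bigr) \\
&\quad + \eta\bigl(\nabla F(\bm{x}_{t+1}, \bm{\xi}_{t+1}) - \nabla f(\bm{x}_{t+1})\bigr).
\end{align*}

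Next I would take conditional expectation with respect to the filtration that includes $\bm{x}_{t+1}$ but not $\bm{\xi}_{t+1}$. By Assumption~\ref{asp:SGO}, the last term is conditionally zero-mean and therefore orthogonal to the first two deterministic terms, so its squared-norm contribution decouples. Using independence of the $\bm{\xi}_{t+1}^{(i)}$ across nodes together with the variance bound $\sigma^2$, the averaging reduces this to $\eta^2\sigma^2/N$.

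The remaining piece $\|(1-\eta)(\bm{h}_t - \nabla f(\bm{x}_t)) + (1-\eta)(\nabla f(\bm{x}_t) - \nabla f(\bm{x}_{t+1}))\|^2$ I would bound via Young's inequality $\|a+b\|^2 \le (1+c)\|a\|^2 + (1+1/c)\|b\|^2$, choosing $c = \eta/(1-\eta)$ so that the coefficient on $\|\bm{h}_t - \nabla f(\bm{x}_t)\|^2$ collapses to exactly $1-\eta$, as required by the target inequality. The coefficient on the gradient-difference term becomes $(1-\eta)^2/\eta$; applying the $L$-smoothness from Assumption~\ref{asp:LL} yields $\|\nabla f(\bm{x}_t) - \nabla f(\bm{x}_{t+1})\|^2 \le L^2\|\bm{x}_{t+1}-\bm{x}_t\|^2$, and upper-bounding $(1-\eta)^2/\eta$ by $3/\eta$ (valid trivially for $\eta \in (0,1]$) produces the stated $3L^2/\eta$ coefficient. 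Taking total expectation and combining these pieces gives the claimed recursion.

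The main obstacle is bookkeeping rather than conceptual: one must be careful to condition on the correct sigma-algebra so that the cross term involving the stochastic noise vanishes exactly, and to pick the Young's inequality weight that matches the announced leading coefficient $1-\eta$. The factor $3$ is not tight and is chosen to accommodate the whole range $\eta \in (0,1]$ uniformly without needing a case split on $\eta$.
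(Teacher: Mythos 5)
Your proposal is correct and follows essentially the same route as the argument in Fatkhullin et al.\ (EF21M), to which the paper defers this lemma: average the per-node momentum recursion, decompose $\bm{h}_{t+1}-\nabla f(\bm{x}_{t+1})$ into the previous error, the gradient drift $\nabla f(\bm{x}_t)-\nabla f(\bm{x}_{t+1})$, and the fresh zero-mean noise; use conditional orthogonality (valid because $\bm{x}_{t+1}$ is $\bm{\xi}_{t+1}$-measurable-free in the EF21M update order) plus independence across workers to isolate the $\eta^2\sigma^2/N$ term; then apply Young's inequality with weight $c=\eta/(1-\eta)$ and $L$-smoothness. Your bookkeeping is clean, the edge case $\eta=1$ is harmless since the whole deterministic block vanishes, and you in fact obtain the sharper coefficient $(1-\eta)^2L^2/\eta \le L^2/\eta$, so the stated $3L^2/\eta$ follows a fortiori.
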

The proofs of Lemma~\ref{lm:lemma1} and Lemma~\ref{lm:lemma3} are provided in~\cite{fatkhullin2023momentum}. Lemma~\ref{lm:lemma2} is new; its analysis is straightforward, and we omit the details here due to space constraints. The following theorem establishes the convergence of recursion~\eqref{eq:ef21m-1}--\eqref{eq:ef21m-3}. 
\begin{center}
\setlength\fboxsep{1pt}
\colorbox{gray!20}{\parbox{\dimexpr\linewidth-2\fboxsep\relax}{
\begin{theorem}[\textbf{ARC-Top-$\bm K$} Convergence with EF21M]\label{thm:cov-msgd} Under Assumptions \ref{asp:LL}, \ref{asp:SGO} , 
{
with the learning rate $\gamma\le 1/(4L)$, momentum $\eta$ and initial batch size $B_{\text{init}}$,
}
\vspace{-2.6mm}
{\begin{align*}
    &\frac{1}{T}\sum_{t=0}^{T-1}\mathbb{E}[\|\nabla f(\bm{{x}}_t\|^2]\leq\\
    &\quad\hspace{-1mm}\hspace{-0.6mm}\hspace{-0.6mm}\hspace{-0.6mm}\hspace{-0.6mm}\hspace{-0.6mm}\hspace{-0.6mm}\hspace{-0.6mm}\hspace{-0.6mm}\mathcal{O}\hspace{-0.6mm}\left(\hspace{-0.6mm}\frac{\delta_0}{\gamma T} \hspace{-0.6mm}
+ \hspace{-0.6mm}\frac{\eta^3 \sigma^2}{N \alpha^2} \hspace{-0.6mm}
+ \hspace{-0.6mm}\frac{\eta^2 \sigma^2}{N \alpha} \hspace{-0.6mm}
+ \hspace{-0.6mm}\frac{\eta \sigma^2}{N} \hspace{-0.6mm}
+ \hspace{-0.6mm}\frac{\eta \sigma^2}{N \alpha^2 B_{\text{init}} T} \hspace{-0.6mm}
+\hspace{-0.6mm} \frac{\sigma^2}{N \eta B_{\text{init}} T} \hspace{-0.6mm}\hspace{-0.6mm}\right)\hspace{-0.6mm},\nonumber\end{align*}} \hspace{-1mm}where  $\delta_0:=f(\bm{x_0})-\inf_{\bm{x}}f(\bm{x})$.%
If we further choose $\gamma$, $\eta$ and $B_{\text{init}}$ properly,
\textbf{ARC-Top-$\bm K$} with EF21M converges as 
{
\begin{align*}
    &\frac{1}{T}\sum_{t=0}^{T-1}\mathbb{E}\!\left[\|\nabla f({\bm{x}}_t)\|^2\right]\leq \\
    & \mathcal{O}\left(\hspace{-0.6mm}\left(\hspace{-0.6mm}\frac{L\delta_0\,\sigma^{2}}{NT}\hspace{-0.6mm}\right)^{\hspace{-0.6mm}\frac{1}{2}}\hspace{-0.6mm}\hspace{-0.6mm}\hspace{-0.6mm}+\hspace{-0.6mm}\left(\frac{\hspace{-0.6mm}L\delta_0\,\sigma}{\alpha^{\frac{1}{2}} N^{\frac{1}{2}} T}\hspace{-0.6mm}\right)^{\hspace{-0.6mm}\frac{2}{3}}\hspace{-0.6mm}\hspace{-0.6mm}\hspace{-0.6mm}+\hspace{-0.6mm}\left(\frac{\hspace{-0.6mm}L\delta_0\,\sigma^{\frac{2}{3}}}{\alpha^{\frac{2}{3}} N^{\frac{1}{3}} T}\hspace{-0.6mm}\right)^{\hspace{-0.6mm}\frac{3}{4}}\hspace{-0.6mm}\hspace{-0.6mm}\hspace{-0.6mm}+
        \frac{\hspace{-0.6mm}L\delta_0}{\alpha T}
    \right). \nonumber
\end{align*}}\end{theorem}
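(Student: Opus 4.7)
The approach is a Lyapunov-telescoping argument that fuses Lemmas~\ref{lm:lemma1}--\ref{lm:lemma3} and then tunes $\gamma$, $\eta$, and $B_{\text{init}}$. Since Proposition~2 ensures that \arctopK\ is contractive with parameter $\alpha = K/m$, Lemma~\ref{lm:lemma2} applies directly (WLOG $\alpha\le 1/2$, else rescale $K$). The first step is to invoke Lemma~\ref{lm:lemma1} at $\bm g_t = (1/N)\sum_i \bm g_t^{(i)}$ and split the residual as
\[
\|\bm g_t - \nabla f(\bm x_t)\|^2 \le 2\|\bm g_t - \bm h_t\|^2 + 2\|\bm h_t - \nabla f(\bm x_t)\|^2,
\]
which routes the descent error through the compression tracker (Lemma~\ref{lm:lemma2}) and the momentum tracker (Lemma~\ref{lm:lemma3}).

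I would then introduce the Lyapunov potential
\[
\Phi_t := \mathbb{E}[f(\bm x_t)-f^{*}] + c_1\,\mathbb{E}\|\bm g_t-\bm h_t\|^2 + c_2\,\mathbb{E}\|\bm h_t-\nabla f(\bm x_t)\|^2
\]
and compute $\Phi_{t+1} - \Phi_t$ by plugging the three lemmas into their respective terms. Choosing $c_1 \asymp \gamma/\alpha$ drives the net coefficient on $\|\bm g_t-\bm h_t\|^2$ non-positive via the $-c_1\alpha/2$ contraction in Lemma~\ref{lm:lemma2}. Choosing $c_2 \asymp \gamma/\eta + \gamma\eta/\alpha^2$ absorbs both the raw $\gamma$ from the descent decomposition and the feedback $4c_1\eta^2/\alpha$ from Lemma~\ref{lm:lemma2} into the $-c_2\eta$ contraction of Lemma~\ref{lm:lemma3}. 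The net coefficient on $\|\bm x_{t+1}-\bm x_t\|^2$ is $-1/(4\gamma) + 4c_1L^2\eta^2/\alpha + 3c_2L^2/\eta$, which is non-positive under $\gamma = \mathcal{O}(1/L)$ together with mild $\alpha,\eta$ side conditions folded into the constants of $c_1,c_2$. The remaining additive noise contribution is $(c_1+c_2)\eta^2\sigma^2/N$, which after division by $\gamma T/2$ produces exactly the three noise floors $\eta^3\sigma^2/(N\alpha^2)$, $\eta^2\sigma^2/(N\alpha)$, and $\eta\sigma^2/N$ appearing in the claimed raw bound.

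Telescoping from $t=0$ to $T-1$ and dropping the non-negative $\Phi_T$ yields $(1/T)\sum_t \mathbb{E}\|\nabla f(\bm x_t)\|^2 \le \mathcal{O}(\Phi_0/(\gamma T))$ plus the noise floors. With the standard initialization $\bm g_0^{(i)} = \bm h_0^{(i)}$ set to a mini-batch estimate of size $B_{\text{init}}$, the $c_1$-term of $\Phi_0$ vanishes while the $c_2$-term contributes $\mathcal{O}(c_2 \sigma^2/(NB_{\text{init}}))$; after dividing by $\gamma T/2$, this becomes $\mathcal{O}(\sigma^2/(N\eta B_{\text{init}} T) + \eta\sigma^2/(N\alpha^2 B_{\text{init}} T))$, matching the final two terms of the raw bound and completing the first half of the theorem.

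The closing step is the choice of $\gamma,\eta,B_{\text{init}}$. Taking $B_{\text{init}}$ large enough to suppress the initial-batch terms, one balances $\delta_0/(\gamma T)$ against each of the three noise floors, subject to $\gamma \le 1/(4L)$. Each of the four terms in the final rate arises from one such regime: the $(L\delta_0\sigma^2/(NT))^{1/2}$ term from balancing against $\eta\sigma^2/N$ (the momentum-free EF21M rate), the $2/3$-power term from balancing against $\eta^2\sigma^2/(N\alpha)$, the $3/4$-power term from balancing against $\eta^3\sigma^2/(N\alpha^2)$, and the $L\delta_0/(\alpha T)$ term from the $\alpha$-dependent cap on $\gamma$ enforced by the $-1/(4\gamma)$ absorption. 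The main obstacle will be the joint verification that the coefficient choices for $c_1,c_2$ are simultaneously compatible with the step-size constraint and sharp enough to confine the $\alpha^{-2}$ factor to the transient (non-leading) terms; this is precisely what ensures the asymptotic rate matches that of the uncompressed algorithm and that the transient complexity is $\mathcal{O}(N/\sigma^2)$ rather than the $\mathcal{O}(N^3/\sigma^2)$ incurred by \TopK.
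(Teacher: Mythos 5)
Your proposal is correct and is essentially the paper's own argument in different bookkeeping: the paper sums the recursions of Lemmas~\ref{lm:lemma2} and~\ref{lm:lemma3} over $t$ and back-substitutes into Lemma~\ref{lm:lemma1}, which is algebraically equivalent to your Lyapunov telescoping with $c_1\asymp\gamma/\alpha$ and $c_2\asymp\gamma/\eta+\gamma\eta/\alpha^2$ (these coefficients are exactly the geometric-series prefactors $2/\alpha$ and $1/\eta$ that appear when the paper divides out the contractions), and your noise-floor and $B_{\text{init}}$ accounting reproduces the paper's raw bound term by term. The only loose point is the aside ``WLOG $\alpha\le 1/2$, else rescale $K$'' — one cannot rescale $K$, but any compressor contractive with $\alpha>1/2$ is a fortiori contractive with $\alpha=1/2$, so Lemma~\ref{lm:lemma2} still applies; this is a cosmetic fix, not a gap.
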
}}\end{center}

\begin{proof}[Proof of Theorem \ref{thm:cov-msgd}]
By applying Lemma \ref{lm:lemma1} and decompose the error between $\bm{g}_t$ and $\nabla f(\bm{x}_t)$ into two terms by $\left\| \bm{g}_{t} - \nabla f(\bm{x}_{t}) \right\|^{2}
\le 2 \left\| \bm{g}_{t} - \bm{h}_{t} \right\|^{2}
+ 2 \left\| \bm{h}_{t} - \nabla f(\bm{x}_{t}) \right\|^{2}$ \\ we obtain:
\begin{align}
& \frac{1}{T} \sum_{t=0}^{T-1} \mathbb{E}\left[\|\nabla f(\bm{x}_t)\|^2\right]\nonumber\\
&\le \frac{2\delta_0}{\gamma T}
\text{\hspace{-0.5mm}}+\text{\hspace{-0.5mm}}  \frac{2}{T} \sum_{t=0}^{T-1} V_t
+ \frac{2}{T} \sum_{t=0}^{T-1} P_t
- \frac{1}{2\gamma^2 T} \sum_{t=0}^{T-1} R_t.\label{eq:A5}
\end{align}
It is noted that the quantities $V_t$, $P_t$, and $R_t$ are defined as follows:
$V_t =  \mathbb{E}\left[\|\bm{g}_t - \bm{h}_t\|^2\right]$,
\vspace{0.5mm}
$P_t = \mathbb{E}\left[\|\bm{h}_t - \nabla f(\bm{x}_t)\|^2\right] $,
$R_t =  \mathbb{E}\left[\|\bm{x}_{t+1} - \bm{x}_t\|^2\right]$.
From Lemma \ref{lm:lemma2}, it follows that
{
\begin{align}
\text{\hspace{-2mm}}\frac{1}{T}\text{\hspace{-0.5mm}} \sum_{t=0}^{T-1} V_t\text{\hspace{-0.5mm}}\le\text{\hspace{-0.5mm}}
\frac{8\eta^2}{\alpha^2 T} \sum_{t=0}^{T-1} P_t
\text{\hspace{-0.5mm}}+\text{\hspace{-0.5mm}} \frac{8L^2 \eta^2}{\alpha^2 T} \text{\hspace{-0.5mm}}\sum_{t=0}^{T-1} R_t
\text{\hspace{-0.5mm}}+\text{\hspace{-0.5mm}} \frac{2\eta^2 \sigma^2}{N \alpha}
\text{\hspace{-0.5mm}}+\text{\hspace{-0.5mm}} \frac{2 V_0}{\alpha T}. \text{\hspace{-0.5mm}}\label{eq:A6}
\end{align}
}
\vspace{-2mm}
From Lemma \ref{lm:lemma3}, it follows that
\begin{align}
\frac{1}{T} \sum_{t=0}^{T-1} P_t
&\le
\frac{3L^2}{\eta^2} \cdot \frac{1}{T} \sum_{t=0}^{T-1} R_t\
+ \frac{\eta \sigma^2}{N}
+ \frac{1}{\eta T} P_0 .\label{eq:A7}
\end{align}
Substituting the results of \ref{eq:A6} and \ref{eq:A7} into \ref{eq:A5}, and by choosing the step size $\gamma$ appropriately , we obtain:
\begin{align}
\frac{1}{T} \sum_{t=0}^{T-1} \mathbb{E}\left[\|\nabla f(\bm{x}_t)\|^2\right]\nonumber
&\le \frac{2\delta_0}{\gamma T}
+ \left( \frac{16\eta^3}{\alpha^2} + \frac{4\eta^2}{\alpha} + 2\eta \right) \frac{\sigma^2}{N}\nonumber\\&\quad
+ \left( \frac{16\eta}{\alpha^2} \hspace{-0.6mm}+\hspace{-0.6mm} \frac{2}{\eta} \right) \frac{P_0}{T}\hspace{-0.6mm}
+ \hspace{-0.6mm}\frac{4 V_0}{\alpha T}.
\end{align}
given $P_0 = \mathbb{E}\left[\|\bm{h}_0 - \nabla f(\bm{x}_0)\|^2\right] \leq \frac{\sigma^2}{N B_{\text{init}}}$, $V_0=0$ we obtain
\vspace{-5mm}
{
\begin{align*}
    &\frac{1}{T}\sum_{t=0}^{T-1}\mathbb{E}[\|\nabla f(\bm{{x}}_t\|^2]\leq\\
    &\quad\mathcal{O}\hspace{-0.6mm}\left(\hspace{-0.6mm}\frac{\delta_0}{\gamma T} \hspace{-0.6mm}
+ \hspace{-0.6mm}\frac{\eta^3 \sigma^2}{N \alpha^2} \hspace{-0.6mm}
+ \hspace{-0.6mm}\frac{\eta^2 \sigma^2}{N \alpha} \hspace{-0.6mm}
+ \hspace{-0.6mm}\frac{\eta \sigma^2}{N} \hspace{-0.6mm}
+ \hspace{-0.6mm}\frac{\eta \sigma^2}{N \alpha^2 B_{\text{init}} T} \hspace{-0.6mm}
+\hspace{-0.6mm} \frac{\sigma^2}{N \eta B_{\text{init}} T} \hspace{-0.6mm}\hspace{-0.6mm}\right)\hspace{-0.6mm},\end{align*}\vspace{-3mm}}

To further simplify the convergence bound, we impose constraints such that 
\begin{align*}
\frac{\eta^3 \sigma^2}{N \alpha^2} 
&\leq \frac{L \delta_0}{\eta T},
\frac{\eta^2 \sigma^2}{N \alpha} 
\leq \frac{L \delta_0}{\eta T},
\frac{\eta \sigma^2}{N} 
\leq \frac{L \delta_0}{\eta T},
\frac{\eta \sigma^2}{N \alpha^2 B_{\text{init}} T} 
\leq \frac{L \delta_0}{\eta T}.
\end{align*}
\vspace{0mm}
and
{\vspace{0mm}
\begin{align}
\frac{\sigma \sqrt{L \delta_0}}{\alpha\sqrt{B_{\text{init}} N} T}
 \hspace{-0.6mm}\hspace{-0.6mm}\leq\hspace{-0.6mm}{
\max \hspace{-0.6mm}\left\{\hspace{-0.6mm}\hspace{-0.6mm}
\frac{L \delta_0}{\alpha T}\hspace{-0.6mm},\hspace{-0.6mm}\hspace{-0.6mm}
\left( \frac{L \delta_0 \sigma^{\frac{2}{3}}}{\alpha^{\frac{2}{3}} N^{\frac{1}{3}} T} \hspace{-0.6mm}\right)^{\hspace{-0.6mm}\frac{3}{4}}\hspace{-0.6mm}\hspace{-0.6mm}\hspace{-0.6mm},\hspace{-0.6mm}
\hspace{-0.6mm}\left(\hspace{-0.6mm} \frac{L \delta_0 \sigma}{\alpha^{\frac{1}{2}} N^{\frac{1}{2}} T} \hspace{-0.6mm}\right)^{\hspace{-0.6mm}\hspace{-0.6mm}\frac{2}{3}}\hspace{-0.6mm}\hspace{-0.6mm}\hspace{-0.6mm},\hspace{-0.6mm}\hspace{-0.6mm}
\left( \hspace{-0.6mm}\hspace{-0.6mm}\frac{L \delta_0 \sigma^2}{N T} \hspace{-0.6mm}\right)^{\hspace{-0.6mm}\hspace{-0.6mm}\frac{1}{2}}\hspace{-0.6mm}\hspace{-0.6mm}
\right\}}.\nonumber
\end{align}\vspace{-2mm}}
Putting all these together, we achieve the convergence rate
{
\begin{align*}
    &\frac{1}{T}\sum_{t=0}^{T-1}\mathbb{E}\!\left[\|\nabla f({\bm{x}}_t)\|^2\right]\leq \\
    & \mathcal{O}\left(\hspace{-0.6mm}\left(\hspace{-0.6mm}\frac{L\delta_0\,\sigma^{2}}{NT}\hspace{-0.6mm}\right)^{\hspace{-0.6mm}\frac{1}{2}}\hspace{-0.6mm}\hspace{-0.6mm}\hspace{-0.6mm}+\hspace{-0.6mm}\left(\frac{\hspace{-0.6mm}L\delta_0\,\sigma}{\alpha^{\frac{1}{2}} N^{\frac{1}{2}} T}\hspace{-0.6mm}\right)^{\hspace{-0.6mm}\frac{2}{3}}\hspace{-0.6mm}\hspace{-0.6mm}\hspace{-0.6mm}+\hspace{-0.6mm}\left(\frac{\hspace{-0.6mm}L\delta_0\,\sigma^{\frac{2}{3}}}{\alpha^{\frac{2}{3}} N^{\frac{1}{3}} T}\hspace{-0.6mm}\right)^{\hspace{-0.6mm}\frac{3}{4}}\hspace{-0.6mm}\hspace{-0.6mm}\hspace{-0.6mm}+
        \frac{\hspace{-0.6mm}L\delta_0}{\alpha T}
    \right), \nonumber
\end{align*}
which completes the proof.}
\end{proof}

\noindent \textbf{Asymptotic Rate and Transient Iterations.} According to Theorem~\ref{thm:cov-msgd}, the asymptotic convergence rate is dominated by $\mathcal{O}(1/\sqrt{NT})$ as $T \to \infty$, which matches the rate achieved by algorithms using other compressors such as \TopK\ and \RandK. However, to reach this asymptotic regime, the algorithm must run for a sufficiently large number of iterations, commonly referred to as the transient phase. Based on Theorem~\ref{thm:cov-msgd}, we establish that the recursion \eqref{eq:ef21m-1}--\eqref{eq:ef21m-3} with \arctopK\ has transient complexity on the order of $\mathcal{O}(N/\sigma^2)$. This is substantially smaller than the transient complexity of \TopK, which scales as $\mathcal{O}(N^3/\sigma^2)$ (see Table~\ref{table:communication_cmp}). The advantage arises from the contractive property of \arctopK, in contrast to the non-contraction of \TopK.

\section{Experiment}

We evaluate \arctopK\ on both vision and language tasks, and further measure wall-clock time and accuracy across varying node counts. Comparisons are made against uncompressed training, standard \TopK, and \RandK\ baselines. Gradient compression begins after 1000 iterations, following \cite{vogels2019powersgd}. For transformers, we compress only two-dimensional tensors, which account for the majority of gradients. For convolutional neural networks, we follow \cite{wang2018atomo} and reshape four-dimensional convolutional kernels into two-dimensional matrices for compression.

\subsection{Pre-Training on CIFAR}
We evaluate \arctopK\ by training ResNet on CIFAR-10\cite{he2016deep}. Each experiment is repeated 3 times with different random seeds. We report mean accuracy and standard deviation. ResNet-18 and ResNet-50 are trained for 200 epochs with a learning rate of $1\times 10^{-3}$, weight decay of $5\text{\hspace{-0.5mm}}\times\text{\hspace{-0.5mm}} 10^{-4}$ and local batch size of $16$. All algorithms use sparsity $\mu\text{\hspace{-0.5mm}}=\text{\hspace{-0.5mm}}0.2$. For \arctopK, we set projection dimension $r=4$. Table~\ref{table:cifar10_acc} reports the results, with \texttt{\textbf{Dense}} denoting the uncompressed baseline. The highest accuracy in each model–optimizer pair is highlighted in bold. Results are shown both without EF and with EF enabled.

Across all settings, \arctopK\ consistently surpasses \RandK\ in accuracy. Moreover, \arctopK\ attains accuracy comparable to \TopK\ and the uncompressed baseline while incurring lower communication cost. When combined with MSGD, \arctopK\ achieves the highest accuracy for both ResNet-18 and ResNet-50, demonstrating an effective balance between communication efficiency and model performance.

\begin{table}[tb!]
\renewcommand{\arraystretch}{1.5}
\begin{center}
\vspace{-2mm}
\caption{\small Evaluation accuracy (\%) of training ResNet on CIFAR-10 with mean $\pm$ standard deviation over 3 runs. \vspace{-5mm}}
\begin{tabular}{lcccc}
\toprule\addlinespace[-0.2ex]
\multicolumn{1}{c}{\multirow{2}{*}{\textbf{Method}}} & \multicolumn{2}{c}{\textbf{Adam}} & \multicolumn{2}{c}{\textbf{MSGD}} \\ \addlinespace[-0.6ex]
 \cmidrule(lr){2-3} \cmidrule(lr){4-5} \addlinespace[-0.6ex]
 & \textbf{ResNet-18} & \textbf{ResNet-50} & \textbf{ResNet-18} & \textbf{ResNet-50} \\ \addlinespace[-0.6ex] 
\midrule
\texttt{\textbf{Dense}} & 94.01$\pm$0.42 & 94.02$\pm$0.15 & 95.08$\pm$0.08 & 94.96$\pm$0.31 \\ \addlinespace[-0.6ex] 
\multicolumn{5}{c}{\middlehrulefill\ \textit{(without EF)}\ \middlehrulefill}\\[-0.0ex]
\TopK\  & 93.58$\pm$0.10 & 93.86$\pm$0.14 & 94.14$\pm$0.10 & 94.58$\pm$0.32 \\
\RandK\  & 92.54$\pm$0.09 & 92.83$\pm$0.17 & 92.27$\pm$0.27 & 91.63$\pm$0.39 \\
\arctopK\  & 93.00$\pm$0.12 & 93.85$\pm$0.10 & 93.79$\pm$0.09 & 94.39$\pm$0.06 \\ \addlinespace[-0.6ex] 
\multicolumn{5}{c}{\middlehrulefill\ \textit{(with EF)}\ \middlehrulefill}\\[-0.0ex]
\TopK\   & \textbf{93.98}$\pm$\textbf{0.05} &\textbf{94.20}$\pm$\textbf{0.14} & 94.95$\pm$0.14 & 94.80$\pm$0.20 \\
\RandK\  & 93.90$\pm$0.09 & 93.88$\pm$0.41 & 94.81$\pm$0.18 & 94.72$\pm$0.38 \\
\arctopK\   & 93.91$\pm$0.05 & 93.97$\pm$0.06 & \textbf{95.00}$\pm$\textbf{0.18} & \textbf{95.00}$\pm$\textbf{0.09} \\
\bottomrule
\end{tabular}
\label{table:cifar10_acc}
\end{center}
\vspace{0mm}
\end{table}

\begin{table}[tb!]
\renewcommand{\arraystretch}{1.5}
\begin{center}
\caption{\small Results of fine-tuning RoBERTa-base on GLUE.\vspace{-4.5mm}}
{
\tabcolsep=1pt
\begin{tabular}{lccccccccc}
\toprule
\textbf{Method} & \textbf{CoLA} & \textbf{SST-2} & \textbf{MRPC} & \textbf{STS-B} & \textbf{QQP} & \textbf{MNLI} & \textbf{QNLI} & \textbf{RTE} & \textbf{Average} \\
\midrule
\texttt{\textbf{Dense}} & 64.16 & 94.73 & 93.15 & 91.16 & 91.86 & 87.51 & 92.81 & 81.59 & 87.12 \\
\midrule
\TopK\  & 65.29 & 94.27 & 92.76 & \textbf{91.13} & \textbf{92.02} & 87.45 & 92.84 & 80.51 & 87.03 \\
\RandK\  & 62.26 & 94.61 & 92.47 & 90.93 & 91.88 & 87.34 & \textbf{93.04} & 80.51 & 86.63 \\
\arctopK\  & \textbf{66.04} & \textbf{94.84} & \textbf{93.22} & 91.03 & 91.98 & \textbf{87.58} & \textbf{93.04} & \textbf{81.59} & \textbf{87.42} \\
\bottomrule
\end{tabular}}
\label{table:glue_acc}
\end{center}
\vspace{-5mm}
\end{table}

\vspace{-1mm}
\subsection{Fine-Tuning on GLUE}\label{exp:glue}
\noindent We fine-tune RoBERTa-base on GLUE using 4 NVIDIA RTX~4090 (24\,GB) GPUs, employing Adam without weight decay and a maximum sequence length of 512\cite{liu2019roberta}. All tasks are fine-tuned for 30 epochs with a linear learning-rate schedule. We set sparsity $\mu{=}0.2$ for all compressors and fix the projection rank at $r{=}4$ for \arctopK. For CoLA and MRPC we use a learning rate of $3\times10^{-5}$ with local batch size 32, whereas all other tasks use $1\times10^{-5}$ with size 16. 

Table~\ref{table:glue_acc} reports accuracy on the 8 GLUE tasks, with the best score per column in \textbf{bold}. \texttt{\textbf{Dense}} denotes the baseline without gradient compression. Across tasks, \arctopK\ consistently surpasses \RandK, and it matches or exceeds \TopK\ and the uncompressed baseline while requiring substantially less communication, achieving the highest average score and a favorable accuracy--communication tradeoff.

\vspace{-1mm}
\subsection{Pre-Training on C4}\label{sec:pt_c4}
We pre-train LLaMA models on the C4 corpus~\cite{touvron2023llama}~\cite{raffel2020exploring} using standard Adam with a local batch size of 128 and a sequence length of 256. Following~\cite{zhao2024galore}, we train LLaMA-60M and LLaMA-130M on 1.1B and 2.2B tokens respectively. All compressors use error feedback, retain $\mu=0.2$ entries per tensor, and set $r{=}4$ for \arctopK. We select the best result from learning rates $\{1\times10^{-3},\,2\times10^{-3}\}$.

Table \ref{table:c4} reports the final validation perplexity. \arctopK\ outperforms \RandK\ in pre-training tasks consistently. \arctopK\ also outperform \TopK\ when training large models of same compression ratio with less communication.
\begin{table}[t!]  
\renewcommand{\arraystretch}{1.5}
\begin{center}
\caption{ \small  Validation perplexity of pre-training LLaMA on C4.\vspace{-2mm}}
{\tabcolsep=2pt
\begin{tabular}{lcc}
\toprule
\textbf{Method} & \textbf{LLaMA-60M} & \textbf{LLaMA-130M}   \\
\midrule
\texttt{\textbf{Dense}}       & 30.59& 24.72\\
\midrule
\RandK\     & 43.31& 36.31\\
\TopK\      & 32.20& \textbf{29.19}\\
\arctopK\  & \textbf{33.82}& 26.43\\
\bottomrule
\end{tabular}
}
\label{table:c4}
\end{center}
\vspace{-1mm}
\end{table}

\subsection{Further Study}
\noindent \textbf{Wall-clock performance}. We measure per-iteration wall-clock time for pre-training LLaMA models (60M, 130M, 350M, and 1B) on C4 using 4 NVIDIA A100 (40\,GB) GPUs. Each run uses Adam for 50 iterations. We report the mean time per iteration. The local batch size is fixed at 1. For sparsification, the sparsity is set to \(\mu{=}0.2\). Gradients are communicated via the NCCL backend using shared memory (SHM) transport, and NVLink peer-to-peer is explicitly disabled to emulate a low-bandwidth, multi-machine setting. As summarized in Table~\ref{table:c4_time}, sparsification yields limited time savings for small models because the compression overhead offsets communication gains. However, as model size increases and communication dominates, the relative overhead of compression diminishes and benefits become pronounced. For LLaMA-1B, \arctopK\ reduces the average per-iteration time from 3.0854\,s to 1.2130\,s with a \(60.69\%\) reduction, and achieves the best wall-clock performance across all model sizes.

\begin{table}[t!]  
\renewcommand{\arraystretch}{1.5}
\begin{center}
\caption{ \small Average per‐iteration training time (in seconds) of sparsification algorithms when training LLaMA.\vspace{-2mm}}
{\tabcolsep=2pt
\begin{tabular}{lcccc}
\toprule
\textbf{Method}             & \textbf{LLaMA-60M}    & \textbf{LLaMA-130M}   & \textbf{LLaMA-350M}   & \textbf{LLaMA-1B}      \\
\midrule
\texttt{\textbf{Dense}}      & 0.1469 & 0.3243 & 0.8775 & 3.0854 \\
\midrule
\TopK\   & 0.1526 & 0.3287 & 0.8497 & 2.9628 \\
\RandK\  & 0.0752 & 0.1480 & 0.3544 & 0.9581 \\
\arctopK\  & 0.1061 & 0.1798 & 0.3745 & 1.2130 \\
\bottomrule
\end{tabular}
}
\label{table:c4_time}
\vspace{-4mm}
\end{center}
\end{table}

\noindent \textbf{Scaling to 64 nodes}. To validate the scalability of \arctopK\ in multi-node distributed settings, we conduct experiments by fine-tuning RoBERTa-base under the description in Section~\ref{exp:glue}, while fixing the global batch size to 64. The number of nodes scales from 8 to 64 for each method. Each experiment is repeated 3 times with independent random seeds, and the results are reported as the mean and standard deviation in Table~\ref{table:nodes}. The results indicate that \arctopK\ consistently matches the performance of the dense baseline across scales from 8 to 64 nodes, while substantially outperforming the \RandK\ compressor under the same compression ratio. These findings underscore the strong scalability of \arctopK, highlighting its promise for large-scale distributed training and federated learning scenarios involving massive edge devices.

\noindent \textbf{Saving in communication bits}. We quantify communication bits by pre-training LLaMA-130M for $10{,}000$ iterations with a global batch size of $256$; all remaining hyperparameters follow Section~\ref{sec:pt_c4}. Figure~\ref{fig:loss_bit} plots training loss against cumulative communicated bits.In the figure, \arctopK\ reaches the target loss with fewer bits than the dense baseline. Furthermore, although \arctopK\ involves slightly more communication per iteration than \RandK, its faster convergence yields lowest {total} communication to attain the same loss.

\begin{table}[t!]  
\renewcommand{\arraystretch}{1.5}
\begin{center}
\caption{ \small Evaluation matthews correlation of fine-tuning pre-trained RoBERTa-base on COLA with mean $\pm$ standard deviation over 3 runs.\vspace{-2mm}}
{\tabcolsep=5pt
\begin{tabular}{lcccc}
\toprule
\textbf{Method} & \textbf{8 Nodes}    & \textbf{16 Nodes}   & \textbf{32 Nodes}   & \textbf{64 Nodes}      \\
\midrule
\texttt{\textbf{Dense}}       & 64.00$\pm$0.80 & 63.67$\pm$1.66 & 64.20$\pm$1.49 & 63.78$\pm$1.12 \\
\midrule
\TopK\     & 64.01$\pm$0.39 & 63.34$\pm$0.90 & 63.50$\pm$1.27 & \textbf{64.02}$\pm$\textbf{1.95} \\
\RandK\      & 62.67$\pm$1.48 & 62.60$\pm$1.04 & 62.13$\pm$1.15 & 62.42$\pm$0.16 \\
\arctopK\  & \textbf{64.50}$\pm$\textbf{1.30} & \textbf{64.15}$\pm$\textbf{0.47} & \textbf{64.53}$\pm$\textbf{0.64} & 63.70$\pm$2.25 \\

\bottomrule
\end{tabular}
}
\label{table:nodes}
\end{center}
\vspace{-2mm}
\end{table}

\section{Conclusion}
In this work, we propose \arctopK, an All-Reduce-compatible sparsification scheme for communication-efficient distributed learning. By aligning sparsity patterns across nodes via a lightweight sketch, \arctopK\ lowers communication while preserving convergence comparable to dense training. We establish convergence guarantees under distributed settings and experiments on RoBERTa fine-tuning and large-scale multi-node tasks show that \arctopK\ matches dense baselines while significantly outperforming random sparsification at the same compression ratio. Finally, \arctopK\ scales efficiently from 8 to 64 nodes, demonstrating its suitability for large-scale distributed and federated learning.

\begin{figure}[tb!]
\centering
{\includegraphics[width=2.5in]{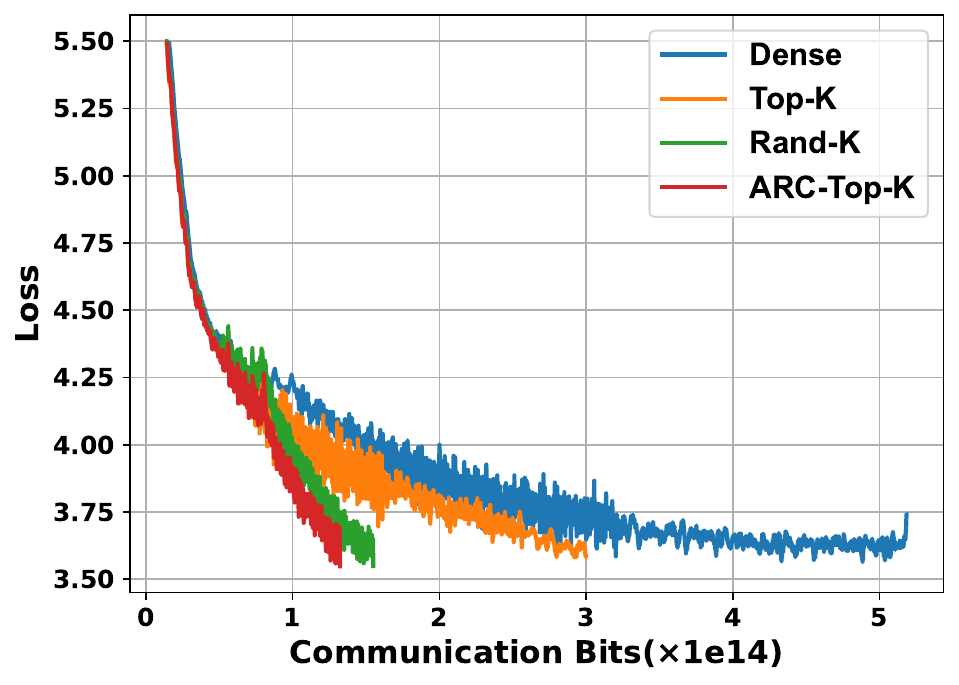}}
\vspace{-4mm}\caption{Loss curves of pre-training LLaMA-130M on C4.}
\label{fig:loss_bit}
\vspace{2mm}
\end{figure}

\vspace{-0.5mm}

\bibliographystyle{ieeetr}
{
\footnotesize
\bibliography{reference}
}

\end{document}